\newlength{\defbaselineskip}
\newtheorem{thm}{Theorem}
\newtheorem{proof}{Proof}
\newcommand{\x}{\mathbf{x}}
\newcommand{\X}{\mathbf{X}}
\newcommand{\C}{\mathbf{C}}
\newcommand{\R}{\mathbf{R}}
\newcommand{\E}{\mathbb{E}}    
\newcommand{\RR}{\mathbb{R}}    
\DeclareMathOperator{\diag}{diag}
\DeclareMathOperator{\tr}{tr}
\begin{document}

\title{
Estimation of the sample covariance matrix from compressive measurements\footnote{This paper is a postprint of a paper submitted to and accepted for publication in \textit{IET Signal Processing} and is subject to Institution of Engineering and Technology Copyright. The copy of record is available at the IET Digital Library.}
}

\author{
Farhad Pourkamali-Anaraki
\thanks{
Department of Electrical, Computer, and Energy Engineering,
University of Colorado Boulder,
Boulder, CO, USA.
E-mail:  farhad.pourkamali@colorado.edu
}
}


\maketitle

\begin{abstract}
	This paper focuses on the estimation of the sample covariance matrix from low-dimensional random projections of data known as compressive measurements. In particular, we present an unbiased estimator to extract the covariance structure from compressive measurements obtained by a general class of random projection matrices consisting of i.i.d.~zero-mean entries and finite first four moments. In contrast to previous works, we make no structural assumptions about the underlying covariance matrix such as being low-rank. In fact, our analysis is based on a non-Bayesian data setting which requires no distributional assumptions on the set of data samples.
	Furthermore, inspired by the generality of the projection matrices, we propose an approach to covariance estimation that utilizes sparse Rademacher matrices. Therefore, our algorithm can be used to estimate the covariance matrix in applications with limited memory and computation power  at the acquisition devices. 
	Experimental results demonstrate that our approach allows for accurate estimation of the sample covariance matrix on several real-world data sets, including video data. 
\end{abstract}
\section{Introduction}\label{intro}
In recent years, there has been growing interest in developing signal processing tasks and learning methods that work directly on compressive versions of data~\cite{DavenportCSSignal}. The main idea behind this approach is to maintain informative sketches by using low-dimensional random projections of data known as compressive measurements. The task of learning is then done using the compressive measurements instead of the full data. As a result, these learning techniques can be used to reduce the computational, storage, and communication burden in  today's high-dimensional data regimes~\cite{BigDataSigMagazine}. 

There are several lines of work that focus on performing signal processing tasks and information retrieval using only the partial information embedded in the compressive measurements. For example, the authors in~\cite{PerformanceLimitsCS} studied the performance limit of classification based on compressive measurements. In~\cite{ChangeAtia}, Atia studied the problem of change detection based on random 
projections that satisfy the restricted isometry property (RIP)~\cite{CSCandes}.
Moreover, data-adaptive dictionary learning from compressive versions of data has been studied in several works, including~\cite{BlindCS_2010,StuderDL,Pourkamali_CKSVD,Pourkamali_DL_SampTA}.  

In this paper, we focus on the  estimation of the sample covariance matrix from compressive measurements. Covariance matrix estimation is a fundamental problem in signal
processing and data analysis. For example, eigendecomposition of the data's covariance matrix can be used to perform principal component analysis (PCA), which is frequently used for feature extraction~\cite{MallatPCA}.

In this work, we propose and analyze an unbiased estimator for the sample covariance matrix, where the data samples are observed only through low-dimensional random projections. In particular, we consider a general class of random projections that has entries drawn independent and identically distributed (i.i.d.) from a zero-mean distribution with finite first four moments. Thus, one can use sparse random projections~\cite{VerySparseRandom} with independent Rademacher entries (a Rademacher random variable takes two values $\pm1$ with equal probability) to achieve reduction of the data acquisition cost and computational complexity. Therefore, our covariance estimator can be utilized on power constrained mobile devices and wearables such as cell phones and surveillance cameras~\cite{BigDataSigMagazine,sindhwani2015structured}. Our estimator can also be used in the distributed covariance estimation problem, e.g., environmental/atmospheric monitoring, where the data samples are observed at distributed sensors. In such settings, we would like to reveal the underlying structure with minimal measurement and communication overhead~\cite{PETRELS,kumar2013survey,azizyan2014subspace}.

The key contributions that differentiate this paper from prior work are: (i) a fixed data setting is considered without making distributional assumptions on the set of data samples; (ii) we do not impose structural assumptions on the covariance matrix such as being low-rank; and (iii) our theoretical analysis does not require random projections that satisfy the distributional Johnson-Lindenstrauss property~\cite{park2014modal}. Instead, our theoretical result applies to a wide range of random projections with entries drawn from a zero-mean distribution.

The problem setup and a brief review of prior work are presented in Section~\ref{Preliminaries}. In Section~\ref{CovEstimator}, we introduce our unbiased estimator for the sample covariance matrix, with theoretical guarantees deferred to Section~\ref{Proof}. In Section~\ref{ExperResults}, we present experimental results on three real-world data sets from different application domains to verify the performance of our covariance estimator in the compressive domain.
\section{Preliminaries}\label{Preliminaries}
\subsection{Notation}\label{Notation}
We name column vectors by lower-case bold letters and matrices by upper-case bold letters. The matrix $\mathbf{I}_{p\times p}$ is the identity matrix of size $p$. For a symmetric matrix $\C\in\RR^{p\times p}$, $\|\C\|_2$ represents the spectral norm and defined as $\|\C\|_2=\max_{\mathbf{v}:\|\mathbf{v}\|_2=1}\mathbf{v}^T\C\mathbf{v}$.

Moreover, let $\tr(\C)$ denote the trace of the matrix $\C$, and $\diag(\C)$ represent the matrix formed by zeroing all but the diagonal entries of the matrix $\C$. 
\subsection{Problem formulation}\label{Formulation}
Consider a collection of data samples $\x_1,\ldots,\x_n$ in $\RR^{p}$ and let $\X=[\x_1,\ldots,\x_n]\in\RR^{p\times n}$ be a matrix whose $i$-th column is the data sample $\x_i$. In this work, we do not make any distributional assumptions on the data and our goal is to recover the sample covariance matrix $\C_n$ from a single pass over the data:
\begin{equation}\label{eq:C_n}
	\C_n=\frac{1}{n}\sum_{i=1}^{n}\x_i\x_i^{T}=\frac{1}{n}\X\X^{T}.
\end{equation}
In modern data settings, the high dimensionality of massive data sets and limited resources in the data acquisition process make full data access infeasible~\cite{Fowler}. To address this problem, we aim to recover $\C_n$ from compressive measurements of the data. This can be viewed as forming a random matrix $\R_i\in \RR^{p\times m}$ with $m<p$ for  $\x_i$, $i=1,\ldots,n$. The entries of $\R_i$ are drawn i.i.d.~from a zero-mean distribution with finite first four moments~\cite{NewBoundsCompressive}. We are interested in estimating the sample covariance matrix $\C_n$ from low-dimensional random projections $\{\R_i^{T}\x_i\}_{i=1}^{n}\in \RR^m$. 

As we will see, the shape of the distribution in generating random matrices $\{\R_i\}_{i=1}^{n}$ is an important factor in order to find an unbiased estimator for the sample covariance matrix. The kurtosis is a measure of heaviness of tail for a distribution. It is defined as $\kappa:=\frac{\mu_4}{\mu_2^2}-3$, where $\mu_2$ and $\mu_4$ are the second and fourth moments of the distribution. The value of kurtosis can also be used as a measure of deviation from the Gaussian distribution, since the kurtosis of a zero-mean Gaussian distribution is zero. 
\subsection{Relation to prior work}\label{PriorWork}
Our work is most closely related to~\cite{Pourkamali_ICASSP_2014,Pourkamali_ICML_2014}, where the estimation of covariance matrices from low-dimensional random projections is studied. It was shown that the sample covariance matrix of $\{\R_i\R_i^{T}\x_i\}_{i=1}^{n}$ (scaled by a known factor) can be used to estimate the covariance structure of the full data $\{\x_i\}_{i=1}^{n}$.  The motivation behind this technique is that $\R_i\R_i^{T}\x_i$ approximates the projection of $\x_i$ onto the column space of $\R_i\in\RR^{p \times m}$, where the expensive matrix inverse is eliminated in $\R_i(\R_i^{T}\R_i)^{-1}\R_i^{T}\x_i$.

A major drawback of the prior work is that the result only holds for data samples drawn from a specific probabilistic model known as the spiked covariance model. In fact, this simple statistical model is popular in studying the covariance estimation problem, but unfortunately many real-world data sets do not fall into the spiked covariance model. 

Another disadvantage of the previous work is that the proposed estimators are biased. One must have prior knowledge of the parameters of the probabilistic model to modify these estimators. In reality, it is almost infeasible to accurately estimate these parameters.
\section{The proposed unbiased covariance estimator}\label{CovEstimator}
In this section, we present an unbiased estimator for the sample covariance matrix of the full data $\C_n=\frac{1}{n}\sum_{i=1}^{n}\x_i\x_i^T$ from the compressive measurements $\{\R_i^{T}\x_i\}_{i=1}^{n}\in\RR^{m}$. Recall that the entries of $\R_i\in \RR^{p\times m}$, $m<p$, are drawn i.i.d.~from a zero-mean distribution with finite second and fourth moments $\mu_2$ and $\mu_4$, and kurtosis $\kappa$. One advantage of our approach compared to the prior work is that our work makes much weaker assumptions on the data.  Therefore, our analysis can be used to develop a better understanding of extracting the covariance structure in the compressive domain. 
\subsection{Compressive covariance estimation}\label{CCE}
To begin, let's consider a rescaled version of the sample covariance matrix of the projected data $\{\R_i\R_i^{T}\x_i\}_{i=1}^{n}\in \RR^{p}$:
\begin{equation}
	\widehat{\C}_n\!:=\!\frac{1}{(m^2+m)\mu_2^2}\!\cdot\!\frac{1}{n}\sum_{i=1}^{n}\R_i\R_i^{T}\x_i\x_i^{T}\R_i\R_i^{T}.\label{eq:C_hat_n}
\end{equation}
We compute the expectation of $\widehat{\C}_n$ based on Theorem~\ref{Lemma_Random_Matrix} in Section~\ref{Proof}:
\begin{equation}
	\E[\widehat{\C}_n]\!=\!\C_n\!+\!\frac{\kappa}{m+1}\diag\left(\C_n\right)\!+\!\frac{1}{m+1}\!\tr\left(\C_n\right)\!\mathbf{I}_{p\times p}\label{eq:exp-C-hat-n}
\end{equation}
where $\diag(\C_n)$ represents the matrix formed by zeroing all but the diagonal entries of $\C_n$.

We observe that the expectation of the covariance estimator $\widehat{\C}_n$ has three components: the sample covariance matrix of the full data $\C_n$ that we wish to extract and two additional bias terms. The bias term $\frac{\kappa}{m+1}\diag(\C_n)$ can be viewed as representing a bias of the estimator towards the nearest canonical basis vectors. The value of this term depends on both the kurtosis of the distribution used in generating random matrices $\{\R_i\}_{i=1}^{n}$ and the structure of the underlying covariance matrix $\C_n$. In contrast, the bias term $\frac{1}{m+1}\tr(\C_n)\mathbf{I}_{p\times p}$ in~\eqref{eq:exp-C-hat-n} is independent of the shape of the distribution. This term shows that the energy of the compressed data is somewhat scattered into different directions in $\RR^{p}$, as $\tr(\C_n)$ represents the energy of the input data. Note that these two bias terms are decreasing functions of $m$, where $m$ is the number of linear measurements. 

The challenge here is to remove these two bias terms by modifying the sample covariance matrix $\widehat{\C}_n$ in the compressive domain. To do this, we first find the expectation of $\diag(\widehat{\C}_n)$ by using linearity of expectation:
\begin{eqnarray}
	& \E[\diag(\widehat{\C}_n)] & \!\!\!=\diag(\E[\widehat{\C}_n])\nonumber\\
	&  &\!\!\!= \left(1\!+\!\frac{\kappa}{m+1}\right)\!\diag(\C_n)\!+\!\frac{\tr\left(\C_n\right)}{m+1}\mathbf{I}_{p\times p}.
\end{eqnarray}
We also need to compute the expectation of $\tr(\widehat{\C}_n)$:
\begin{equation}
	\E[\tr(\widehat{\C}_n)]=\tr(\E[\widehat{\C}_n])=\frac{(m+1+\kappa+p)}{m+1}\tr(\C_n)
\end{equation}
since trace is a linear operator. Hence, we can modify $\widehat{\C}_n$ to obtain an unbiased estimator for the sample covariance matrix of the full data:
\begin{equation}
	\widehat{\boldsymbol{\Sigma}}_n:=\widehat{\C}_n-\alpha_1\diag\left(\widehat{\C}_n\right)-\alpha_2\tr\left(\widehat{\C}_n\right)\mathbf{I}_{p\times p}\label{eq:Sigma_hat_n}
\end{equation}
where 
\begin{equation}\label{eq:alpha1}
	\alpha_1:=\frac{\frac{\kappa}{m+1}}{1+\frac{\kappa}{m+1}}
\end{equation}
and 
\begin{equation}\label{eq:alpha2}
	\alpha_2:=\frac{1}{(1+\frac{\kappa}{m+1})(m+1+\kappa+p)}.
\end{equation}
We immediately see that $\widehat{\boldsymbol{\Sigma}}_n$ is an unbiased estimator for the sample covariance matrix of the full data:
\begin{equation}
	\E[\widehat{\boldsymbol{\Sigma}}_n]=\C_n.
\end{equation}
Note that $\alpha_1$ and $\alpha_2$ are two \textit{known} constants that are used to modify the biased estimator $\widehat{\C}_n$ based on $\diag(\widehat{\C}_n)$ and $\tr(\widehat{\C}_n)$.
\subsection{Covariance estimation via sparse random projections}\label{CCE_sparse}
Throughout our analysis, the entries of $\{\R_i\}_{i=1}^{n}$ are drawn from a zero-mean distribution with finite first four moments. A common choice is a random matrix with i.i.d.~Gaussian entries $N(0,1)$ as in~\cite{Pourkamali_ICASSP_2014}. In this case, the kurtosis of the Gaussian distribution is zero which means that the bias term $\frac{\kappa}{m+1}\diag(\C_n)=\textbf{0}_{p\times p}$, regardless of the structure of $\C_n$. In fact, this is due to the rotational invariance of the Gaussian distribution which eliminates the dependence on the orientation of data. However, as the random Gaussian matrix is dense, the memory and computational loads are high for large-scale data sets.

In this paper, we adopt sparse random projections that were originally proposed to reduce the memory and computational burden in the estimation of pairwise distances~\cite{VerySparseRandom}. In this setting, the entries of $\{\R_i\}_{i=1}^{n}\in\RR^{p\times m}$ are distributed on $\{-1,0,+1\}$ with probabilities $\{\frac{1}{2s},1-\frac{1}{s},\frac{1}{2s}\}$. Therefore, the parameter $s\geq1$ controls the sparsity of random matrices such that each column of $\R_i$ has $\frac{p}{s}$ non-zero entries, on average. It is easy to verify that the second and fourth moments are $\mu_2=\mu_4=\frac{1}{s}$ and thus the kurtosis:
\begin{equation}
	\kappa=\frac{\mu_4}{\mu_2^2}-3=s-3.
\end{equation}
Hence, as the sparsity of $\{\R_i\}_{i=1}^{n}$ increases, the value of the bias term $\frac{\kappa}{m+1}\diag(\C_n)=\frac{s-3}{m+1}\diag(\C_n)$ in spectral norm increases proportional to the sparsity parameter $s$. This means that the modification of the covariance estimator $\widehat{\C}_n$ becomes more important as the sparsity parameter $s$ increases. 

In this work, we are specifically interested in choosing the parameters $s$ and $m$ such that the \textit{compression factor}:
\begin{equation}
	\gamma:=\frac{m}{s}<1. 
\end{equation}
The benefits of this choice are twofold. First, the average computation cost to acquire or access each data sample is $O(\frac{mp}{s})=O(\gamma p)$, $\gamma<1$, compared to the cost for acquiring the full data sample $O(p)$. Hence, we can achieve a substantial cost reduction in the data acquisition process when the number of samples $n$ is large. 

Second, the expected number of non-zero entries in each of the $m$ columns of $\R_i\in\RR^{p\times m}$ is $\frac{p}{s}$. Therefore, each projected data $\R_i\R_i^T\x_i\in \RR^p$ has at most $O(\frac{mp}{s})=O(\gamma p)$ non-zero entries on average. Thus, the cost to form $(\R_i\R_i^T\x_i)(\R_i\R_i^T\x_i)^T$ will be $O(\gamma^2p^2)$,  in contrast to the cost for computing $\x_i\x_i^T$ which is $O(p^2)$. Hence, the computational complexity to form the sample covariance matrix $\widehat{\C}_n$ in the compressive domain is reduced by a factor of $\gamma^2$ for some $\gamma<1$. 
Empirical results will be provided later in Section~\ref{ExperResults} to reveal the tradeoffs between computational savings and accuracy on various data sets.

\section{Theoretical results}\label{Proof}
In this section, we provide theoretical guarantees on the expectation of  $\widehat{\C}_n$, which is introduced in the previous section.
\begin{thm}\label{Lemma_Random_Matrix}
	Consider a rescaled version of the sample covariance matrix in the compressive domain:
	\begin{equation}\label{eq:Thm_C_hat_n_1}
		\widehat{\C}_n=\frac{1}{(m^2+m)\mu_2^2}\cdot\frac{1}{n}\sum_{i=1}^{n}\R_i\R_i^{T}\x_i\x_i^{T}\R_i\R_i^{T}
	\end{equation}
	where $\{\x_i\}_{i=1}^{n}\in\RR^p$ are data samples and the entries of $\{\R_i\}_{i=1}^{n}\in\RR^{p\times m}$, $m < p$ , are drawn i.i.d.~from a zero-mean distribution with finite second and fourth moments $\mu_2$ and $\mu_4$, and kurtosis $\kappa$. Then, the expectation of $\widehat{\C}_n$ over the randomness in $\{\R_i\}_{i=1}^{n}$:
	\begin{equation}\label{eq:Thm_C_hat_n}
		\E[\widehat{\C}_n]\!=\!\C_n\!+\!\frac{\kappa}{m+1}\!\diag\left(\C_n\right)\!+\!\frac{1}{m+1}\!\tr\left(\C_n\right)\!\mathbf{I}_{p\times p}
	\end{equation}
	where $\C_n=\frac{1}{n}\sum_{i=1}^{n}\x_i\x_i^{T}$ and $\diag(\C_n)$ denotes the matrix formed by zeroing all but the diagonal entries of $\C_n$.
\end{thm}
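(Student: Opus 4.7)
The plan is to peel off the sum and the randomness in $\{\R_i\}$ in two steps: first, use linearity of expectation together with the fact that the $\R_i$ are i.i.d.\ to reduce to a single sample calculation
\begin{equation*}
\E[\widehat{\C}_n] \;=\; \frac{1}{(m^2+m)\mu_2^2}\cdot\frac{1}{n}\sum_{i=1}^n \E\!\left[\R_i\R_i^T\x_i\x_i^T\R_i\R_i^T\right],
\end{equation*}
so that everything reduces to showing, for a deterministic $\x\in\RR^p$ and a single random matrix $\R\in\RR^{p\times m}$ with i.i.d.\ zero-mean entries,
\begin{equation*}
\E\!\left[\R\R^T\x\x^T\R\R^T\right] \;=\; (m^2+m)\mu_2^2\,\x\x^T + m\mu_2^2\kappa\,\diag(\x\x^T) + m\mu_2^2\,\|\x\|_2^2\,\mathbf{I}_{p\times p}.
\end{equation*}

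Next, I would write $\R\R^T=\sum_{k=1}^m \mathbf{r}_k\mathbf{r}_k^T$ in terms of the columns $\mathbf{r}_k$ of $\R$ and expand the product to obtain
\begin{equation*}
\R\R^T\x\x^T\R\R^T \;=\; \sum_{k,k'=1}^m (\mathbf{r}_k^T\x)(\mathbf{r}_{k'}^T\x)\,\mathbf{r}_k\mathbf{r}_{k'}^T.
\end{equation*}
Because distinct columns of $\R$ are independent, every $k\neq k'$ term splits as
$\E[(\mathbf{r}_k^T\x)\mathbf{r}_k]\,\E[(\mathbf{r}_{k'}^T\x)\mathbf{r}_{k'}]^T$; since $\E[\mathbf{r}_k\mathbf{r}_k^T]=\mu_2\mathbf{I}$ each factor is $\mu_2\x$, contributing $m(m-1)\mu_2^2\x\x^T$ to the total. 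The $m(m-1)$ counts the off-diagonal pairs.

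The substantive piece is the diagonal $k=k'$ term, which requires the full fourth-moment tensor of a single column $\mathbf{r}=\mathbf{r}_k$. Writing the $(a,b)$-entry of $(\mathbf{r}^T\x)^2\mathbf{r}\mathbf{r}^T$ as $\sum_{i,j} x_ix_j\, r_i r_j r_a r_b$, I would invoke the identity
\begin{equation*}
\E[r_i r_j r_a r_b] \;=\; \mu_2^2\bigl(\delta_{ij}\delta_{ab} + \delta_{ia}\delta_{jb} + \delta_{ib}\delta_{ja}\bigr) + (\mu_4-3\mu_2^2)\,\delta_{\{i=j=a=b\}},
\end{equation*}
which follows from the standard case analysis on when an i.i.d.\ zero-mean quadruple pairs up (three pairings, each contributing $\mu_2^2$, minus the triple-count when all four indices coincide, plus $\mu_4$ for the all-equal case). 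Recognizing $\mu_4-3\mu_2^2=\mu_2^2\kappa$, one collapses the three pairing terms respectively into $\|\x\|_2^2\,\mathbf{I}_{p\times p}$, $\x\x^T$, and $\x\x^T$, and the coincidence term into $\kappa\,\diag(\x\x^T)$, yielding
\begin{equation*}
\E\!\left[(\mathbf{r}^T\x)^2\mathbf{r}\mathbf{r}^T\right] \;=\; \mu_2^2\bigl(2\x\x^T + \|\x\|_2^2\mathbf{I}_{p\times p} + \kappa\,\diag(\x\x^T)\bigr).
\end{equation*}

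Finally, I would combine the two contributions: $m(m-1)\mu_2^2\x\x^T$ from the off-diagonal pairs plus $m$ copies of the diagonal term, to obtain $(m^2+m)\mu_2^2\x\x^T + m\mu_2^2\kappa\diag(\x\x^T) + m\mu_2^2\|\x\|_2^2\mathbf{I}$. Dividing by $(m^2+m)\mu_2^2$, averaging over $i$, and using $\tr(\C_n)=\frac{1}{n}\sum_i\|\x_i\|_2^2$ and linearity of $\diag(\cdot)$ yields the claimed identity. The main technical hurdle is purely bookkeeping in the fourth-moment step—correctly enumerating the three index-pairings without double-counting the all-equal case, which is exactly what the $\mu_4-3\mu_2^2$ correction (i.e.\ the kurtosis) absorbs. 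Everything else is linearity of expectation and summation.
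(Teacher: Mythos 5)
Your proof is correct, but it takes a genuinely different decomposition from the paper's. The paper expands $\R\R^T\x\x^T\R\R^T=(\sum_{k=1}^{p}x_k\mathbf{c}_k)(\sum_{l=1}^{p}x_l\mathbf{c}_l)^T$ over the $p$ columns $\mathbf{c}_k$ of the $p\times p$ matrix $\R\R^T$, and then computes the matrices $\E[\mathbf{c}_k\mathbf{c}_l^T]$ entry by entry via an explicit case analysis of which monomials in the $r_{ij}$ have all even degrees. You instead expand over the $m$ columns of $\R$, writing $\R\R^T=\sum_{k=1}^{m}\mathbf{r}_k\mathbf{r}_k^T$, so that independence of distinct columns disposes of the $m(m-1)$ cross terms immediately (each factoring into $\mu_2\x$ times $\mu_2\x^T$), and the whole fourth-moment content is concentrated in the single-column identity $\E[r_ir_jr_ar_b]=\mu_2^2(\delta_{ij}\delta_{ab}+\delta_{ia}\delta_{jb}+\delta_{ib}\delta_{ja})+(\mu_4-3\mu_2^2)\delta_{\{i=j=a=b\}}$. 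I checked this identity and the resulting collapse to $\mu_2^2\bigl(2\x\x^T+\|\x\|_2^2\mathbf{I}_{p\times p}+\kappa\diag(\x\x^T)\bigr)$; combining with the cross terms reproduces the paper's per-sample expectation exactly, and the reduction to and from the single-sample statement via linearity is the same in both arguments. Your route is arguably cleaner: it makes transparent why the answer has exactly three pieces (the two nontrivial pairings give $2\x\x^T$, the $\delta_{ij}\delta_{ab}$ pairing gives $\|\x\|_2^2\mathbf{I}_{p\times p}$, and the all-indices-equal correction is precisely the kurtosis term), and the fourth-moment tensor identity is reusable. What the paper's entry-wise bookkeeping buys is that it never needs to state a tensor identity at all, only elementary observations about odd-degree factors vanishing, at the cost of a longer case enumeration over the entries of $\mathbf{E}_{k,k}$ and $\mathbf{E}_{k,l}$.
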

\begin{proof}
	First, we compute the expectation of each summand in~\eqref{eq:Thm_C_hat_n_1}. Note that the authors in~\cite{Pourkamali_ICML_2014} computed this expectation when each data sample $\x_i$ is drawn i.i.d.~from the spiked covariance model. In this statistical model, it is assumed that the data samples are drawn from a low-rank Gaussian distribution. Here, a new analysis is given where we make no distributional assumptions on the data samples to provide a better understanding of covariance estimation from compressive measurements. 
	
	Consider any $\R_i$, which we call $\R$ to simplify the notation. The entry in the $i$-th row and $j$-th column of $\R\in\RR^{p\times m}$ is denoted by $r_{ij}$, where $1\leq i\leq p$ and $1\leq j\leq m$. Thus, the $k$-th column of the matrix $\R\R^{T}$ has the following form:
	\[
	\mathbf{c}_{k}\!=\![\begin{array}{ccccc}
	\!\sum_{j=1}^{m}r_{1j}r_{kj}, & \!\! \ldots, &\!\!\!\! \sum_{j=1}^{m}r_{kj}^{2},&\!\! \ldots, & \!\!\!\sum_{j=1}^{m}r_{pj}r_{kj}\!\!\end{array}]^{T}.
	\]
	Define $\mathbf{E}_{k,l}:=\E[\mathbf{c}_{k}\mathbf{c}_{l}^{T}]$. We calculate the entries of the matrix $\mathbf{E}_{k,k}=\mathbb{E}[\mathbf{c}_{k}\mathbf{c}_{k}^{T}]$
	using the fact that the entries of $\R$ are drawn i.i.d.~from a zero-mean distribution:\\
	(1) The $k$-th diagonal entry of $\mathbf{E}_{k,k}$: 
	\[
	\mathbb{E}[(\sum_{j=1}^{m}r_{kj}^{2})^{2}]=\mathbb{E}[\sum_{j=1}^{m}r_{kj}^{4}\!+\!\sum_{i\neq j}r_{ki}^{2}r_{kj}^{2}]\!=\!m\mu_{4}\!+\!(m^{2}-m)\mu_{2}^{2}.
	\]
	(2) The other diagonal entries of $\mathbf{E}_{k,k}$, e.g., the $i$-th
	diagonal entry for $i\neq k$: 
	\begin{eqnarray}
		& \mathbb{E}[(\sum_{j=1}^{m}r_{ij}r_{kj})^{2}]& \!\!\!\!= \mathbb{E}[\sum_{j=1}^{m}r_{ij}^{2}r_{kj}^{2}]\!+\!\mathbb{E}[\sum_{j\neq l}r_{ij}r_{kj}r_{il}r_{kl}]\nonumber \\
		& & \!\!\!\!= \mathbb{E}[\sum_{j=1}^{m}r_{ij}^{2}r_{kj}^{2}]\!=\!m\mu_{2}^{2}.\nonumber
	\end{eqnarray}
	(3) The off-diagonal entries of $\mathbf{E}_{k,k}$, e.g.~the entry
	in the $i$-th row and the $l$-th column when $i\neq l$: 
	\[
	\mathbb{E}[(\sum_{j=1}^{m}r_{ij}r_{kj})(\sum_{j=1}^{m}r_{lj}r_{kj})]=0
	\]
	since these entries have at least one term with degree $1$. Therefore, we see that: 
	\begin{eqnarray}
		\mathbf{E}_{k,k} \!&\!\!\! =\!\!\! & \left(m\mu_{4}+\left(m^{2}-2m\right)\mu_{2}^{2}\right)\mathbf{e}_{k}\mathbf{e}_{k}^{T} + m\mu_{2}^{2}\mathbf{I}_{p\times p}\nonumber \\
		\!& \!\!\!=\!\!\! & m\mu_{2}^{2}\left(\frac{\mu_{4}}{\mu_{2}^{2}}-3+m+1\right)\mathbf{e}_{k}\mathbf{e}_{k}^{T}+m\mu_{2}^{2}\mathbf{I}_{p\times p}\nonumber \\
		\!& \!\!\!=\!\!\! & m\mu_{2}^{2}\left(\kappa+m+1\right)\mathbf{e}_{k}\mathbf{e}_{k}^{T}+m\mu_{2}^{2}\mathbf{I}_{p\times p}\label{eq:Ekk}
	\end{eqnarray}
	where $\mathbf{e}_k$ denotes the $k$-th vector of the canonical basis in $\RR^p$. Thus, entries of $\mathbf{e}_k$ are all zero except for the $k$-th one which is $1$.
	
	We can compute the entries of the matrix $\mathbf{E}_{k,l}=\mathbb{E}[\mathbf{c}_{k}\mathbf{c}_{l}^{T}]$ for $k \neq l$ using the same strategy.
	In this case, each entry has at least one term with degree $1$, so
	it will be zero, except the following two terms:\\
	(1) The entry in the $k$-th row and $l$-th column of $\mathbf{E}_{k,l}$: 
	\[
	\mathbb{E}[(\sum_{j=1}^{m}r_{kj}^{2})(\sum_{j=1}^{m}r_{lj}^{2})]=\mathbb{E}[\sum_{j=1}^{m}r_{kj}^{2}]\mathbb{E}[\sum_{j=1}^{m}r_{lj}^{2}]=m^{2}\mu_{2}^{2}
	\]
	where this follows from the fact that the entries of $\R$ are i.i.d.\\
	(2) The entry in the $l$-th row and $k$-th column of $\mathbf{E}_{k,l}$: 
	\[
	\mathbb{E}[(\sum_{j=1}^{m}r_{lj}r_{kj})^{2}]=\mathbb{E}[\sum_{j=1}^{m}r_{lj}^{2}r_{kj}^{2}]=m\mu_{2}^{2}.
	\]
	Hence, we get:
	\begin{equation}
		\mathbf{E}_{k,l}=m^{2}\mu_{2}^{2}\mathbf{e}_{k}\mathbf{e}_{l}^{T}+m\mu_{2}^{2}\mathbf{e}_{l}\mathbf{e}_{k}^{T}.\label{eq:Ekl}
	\end{equation}
	
	Now, we compute the expectation $\E[\R\R^T\x\x^T\R\R^T]$ over the randomness in $\R$ for a fixed data sample $\x=[x_1,\ldots,x_p]^T$:
	\begin{eqnarray}
		\E[\R\R^T\x\x^T\R\R^T]\!& \!\!=\!\!&\!\E[(\sum_{k=1}^{p}x_k\mathbf{c}_k)(\sum_{l=1}^{p}x_l\mathbf{c}_l)^T] \nonumber \\
		\!& \!\! =\!\!&\! \sum_{k=1}^{p}x_k^2\mathbf{E}_{k,k}+\sum_{k\neq l}x_k x_l \mathbf{E}_{k,l} \nonumber \\
		\!&  \!\!=\!\!&\!\left(m^2+m\right)\mu_2^2\x\x^T\!+\!\kappa m\mu_2^2\diag\left(\x\x^T\right)\nonumber\\
		\!& &\!+m\mu_2^2\tr\left(\x\x^T\right)\mathbf{I}_{p\times p} 
	\end{eqnarray}
	where we used $\|\x\|_2^2=\tr(\x\x^T)$. We rescale the expectation:
	\begin{eqnarray}
		& &\frac{1}{\left(m^2+m\right)\mu_2^2}\E[\R\R^T\x\x^T\R\R^T] \nonumber \\
		&&=\x\x^T\!+\!\frac{\kappa}{m+1}\diag\left(\x\x^T\right)\!+\!\frac{1}{m+1}\tr\left(\x\x^T\right)\mathbf{I}_{p\times p}.
	\end{eqnarray}
	Next, we rewrite the sample covariance matrix $\widehat{\C}_n$ as:
	\[
	\widehat{\C}_n=\frac{1}{n}\sum_{i=1}^{n}\frac{1}{\left(m^2+m\right)\mu_2^2}\R_i\R_i^T\x_i\x_i^T\R_i\R_i^T.
	\]
	Using linearity of expectation, it is straightforward to see:
	\begin{equation}
		\E[\widehat{\C}_n]\!=\!\C_n\!+\!\frac{\kappa}{m+1}\diag\left(\C_n\right)\!+\!\frac{1}{m+1}\tr\left(\C_n\right)\mathbf{I}_{p\times p}
	\end{equation}
	and this completes the proof.
\end{proof}

\section{Experimental results}\label{ExperResults}
In this section, we examine the performance of our unbiased covariance estimator $\widehat{\boldsymbol{\Sigma}}_n$ on three real-world data sets. We compare our proposed $\widehat{\boldsymbol{\Sigma}}_n$ with the biased estimator introduced in~\cite{Pourkamali_ICML_2014}. To evaluate the estimate accuracy, we use the normalized covariance estimation error defined as:
\[
\text{normalized estimation error}(\widehat{\boldsymbol{\Sigma}}_n)\!:=\!\frac{\|\widehat{\boldsymbol{\Sigma}}_n\!-\!\C_n\|_2}{\|\C_n\|_2}.
\]
Therefore, this criterion is a measure of closeness of the estimated covariance matrix from compressive measurements to the underlying covariance matrix $\C_n$ based on the spectral norm. Since the estimation of covariance matrix from compressive measurements is stochastic, we re-run each of the following experiments $1000$ times and report the mean.

In all experiments, the parameter $\frac{m}{p}=0.4$ is fixed and we analyze the accuracy of the estimated covariance matrix from the sparse random projections, introduced in Section~\ref{CCE_sparse}, for various values of compression factor $\gamma=\frac{m}{s}$. Note that we are interested in choosing the parameter $s$ such that $\gamma<1$. In fact, a value of $\gamma$ closer to zero indicates the case where we achieve substantial reduction of the acquisition and computation cost by increasing the parameter $s$ in the distribution of random matrices. 
\subsection{MNIST data set}
In the first experiment, we examine the MNIST data set of handwritten digits. This data set consists of centered versions of digits that are stored as a $28\times28$ pixel image. We use data from digit ``0'' which contains $n=6903$ data samples that are vectorized with the dimension $p=28^2=784$. Some examples are shown in Fig.~\ref{fig:mnist_data_visualization}.
\begin{figure}[!h]
	\centering
	\includegraphics[width=0.5\textwidth]{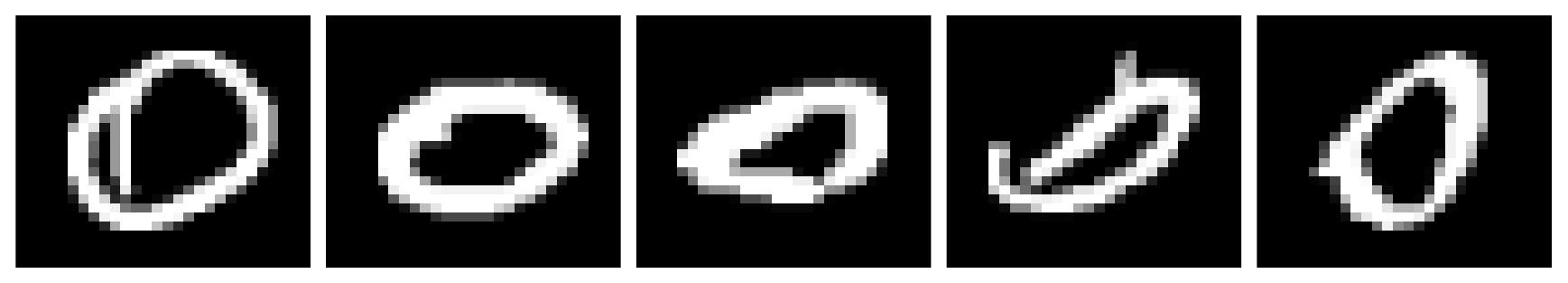}
	\caption{Examples from digit ``0'' in the MNIST data set. 
	}
	\label{fig:mnist_data_visualization}
\end{figure}

The performance of our unbiased estimator $\widehat{\boldsymbol{\Sigma}}_n$ is compared with the biased estimator~\cite{Pourkamali_ICML_2014} in Fig.~\ref{fig:mnist_cov_error}. We see that our proposed estimator $\widehat{\boldsymbol{\Sigma}}_n$ results in more accurate estimates for all values of the compression factor $\gamma$. For example, at $\gamma=0.1$, our estimator $\widehat{\boldsymbol{\Sigma}}_n$ decreases the estimation error by almost a factor of $2$. 
\begin{figure}[!h]
	\centering
	\includegraphics[width=\textwidth]{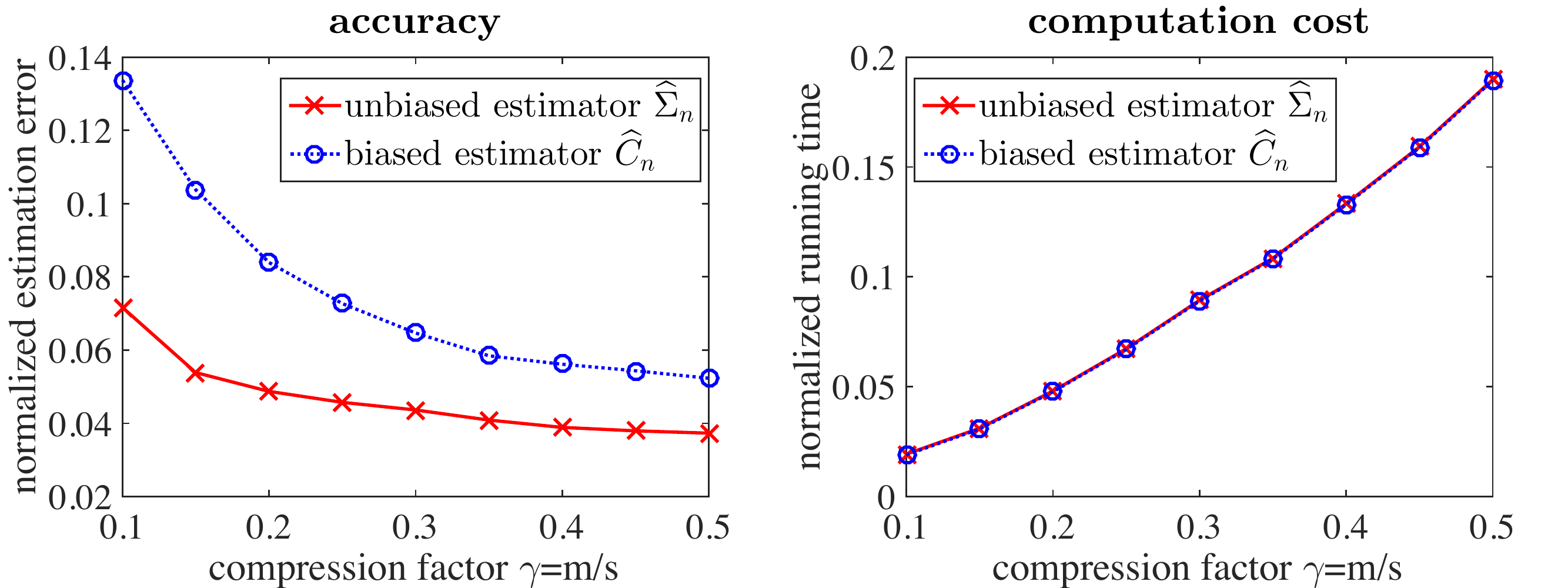}
	\caption{Accuracy and computation cost of the estimated covariance matrix from sparse random projections on the MNIST data set. Our estimator $\widehat{\boldsymbol{\Sigma}}_n$ outperforms the biased estimator introduced in~\cite{Pourkamali_ICML_2014} for all values of the compression factor $\gamma$. 
	}
	\label{fig:mnist_cov_error}
\end{figure}

To see the tradeoffs between computational savings and accuracy, the running times to form our unbiased covariance etsimator $\widehat{\boldsymbol{\Sigma}}_n$ and the biased estimator $\widehat{\C}_n$ are reported in Fig.~\ref{fig:mnist_cov_error}. The running times are normalized by the time spent to form the sample covariance matrix of the full data $\C_n$. Both unbiased and biased estimators from compressive measurements show a speedup over $\C_n$ by almost a factor of $\gamma^2$, as expected from the discussion in Section~\ref{CCE_sparse}. Moreover, we observe that $\widehat{\boldsymbol{\Sigma}}_n$ has roughly the same running time as $\widehat{\C}_n$. Hence, the additional computation cost to remove the bias terms in equation \eqref{eq:Sigma_hat_n} is negligible. Consequently, our unbiased estimator $\widehat{\boldsymbol{\Sigma}}_n$ leads to more accurate estimates than the biased estimator $\widehat{\C}_n$ with approximately the same computation cost.

We also show a sample visual comparison of the first eigenvector of the underlying covariance matrix $\C_n$ and that estimated by our approach $\widehat{\boldsymbol{\Sigma}}_n$ for $\gamma=0.1$, $0.3$, and $0.5$. 
Note that the eigenvectors of covariance matrix encode an orthonormal basis that captures as much of the data's variability as possible. Therefore, we intend to visualize the first eigenvector of $\C_n$ and that estimated by our approach from sparse random projections as a measure of information extraction.
As we see in Fig.~\ref{fig:mnist_pc}, for small values of the compression factor, e.g., $\gamma=0.1$, the resulting eigenvectors from the full data and compressed data are almost identical. In fact, the first eigenvector of our estimator $\widehat{\boldsymbol{\Sigma}}_n$ in Fig.~\ref{fig:mnist_pc} reveals the underlying structure and pattern in the data set consisting of handwritten digits ``0''.
\begin{figure}[!h]
	\centering
	\includegraphics[width=0.5\textwidth]{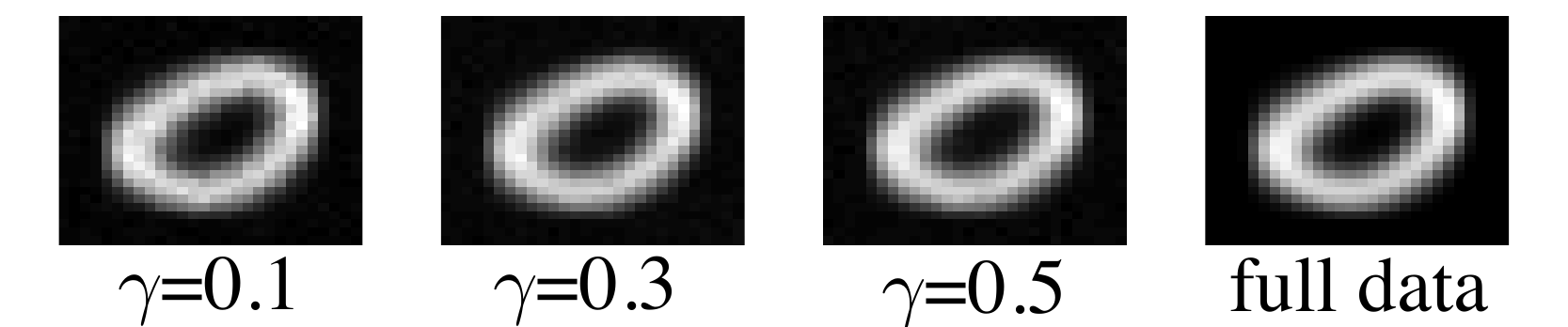}
	\caption{Visual comparison between the first eigenvector of the sample covariance matrix $\C_n$ (full data) and that estimated by our proposed estimator $\widehat{\boldsymbol{\Sigma}}_n$ in the compressive domain for various values of the compression factor $\gamma$. 
	}
	\label{fig:mnist_pc}	
\end{figure}

\subsection{Gen4 data set}
Our second experiment is with the gen4 data matrix of size $1537\times 4298$ ($p=1537$ and $n=4298$) from linear programming problems. This data set is available from the University of Florida Sparse Matrix Collection.

In Fig.~\ref{fig:gen4_cov_error}, the accuracy of our proposed covariance estimator $\widehat{\boldsymbol{\Sigma}}_n$ is compared with the biased estimator in~\cite{Pourkamali_ICML_2014}. We observe that  covariance matrices recovered using our estimator $\widehat{\boldsymbol{\Sigma}}_n$ are significantly more accurate than those returned by~\cite{Pourkamali_ICML_2014}. In fact, our proposed estimator $\widehat{\boldsymbol{\Sigma}}_n$ decreases the estimation error by an order of magnitude.

Moreover, we examine the tradeoffs between computational savings and accuracy on this data set. The running times to compute $\widehat{\boldsymbol{\Sigma}}_n$ and $\widehat{\C}_n$ for various values of $\gamma$ are reported in Fig.~\ref{fig:gen4_cov_error}. Recall that the running times are normalized by the time spent to form the sample covariance matrix of the full data $\C_n$. We see that both estimators show a speedup over $\C_n$ by almost a factor of $\gamma^2$. Similar to the previous example, the additional computation cost to remove the bias terms in equation~\eqref{eq:Sigma_hat_n}  is negligible. Therefore, our unbiased estimator $\widehat{\boldsymbol{\Sigma}}_n$ results in more accurate estimates than $\widehat{\C}_n$ with roughly the same computation cost.
\begin{figure}[!h]
	\centering
	\includegraphics[width=\textwidth]{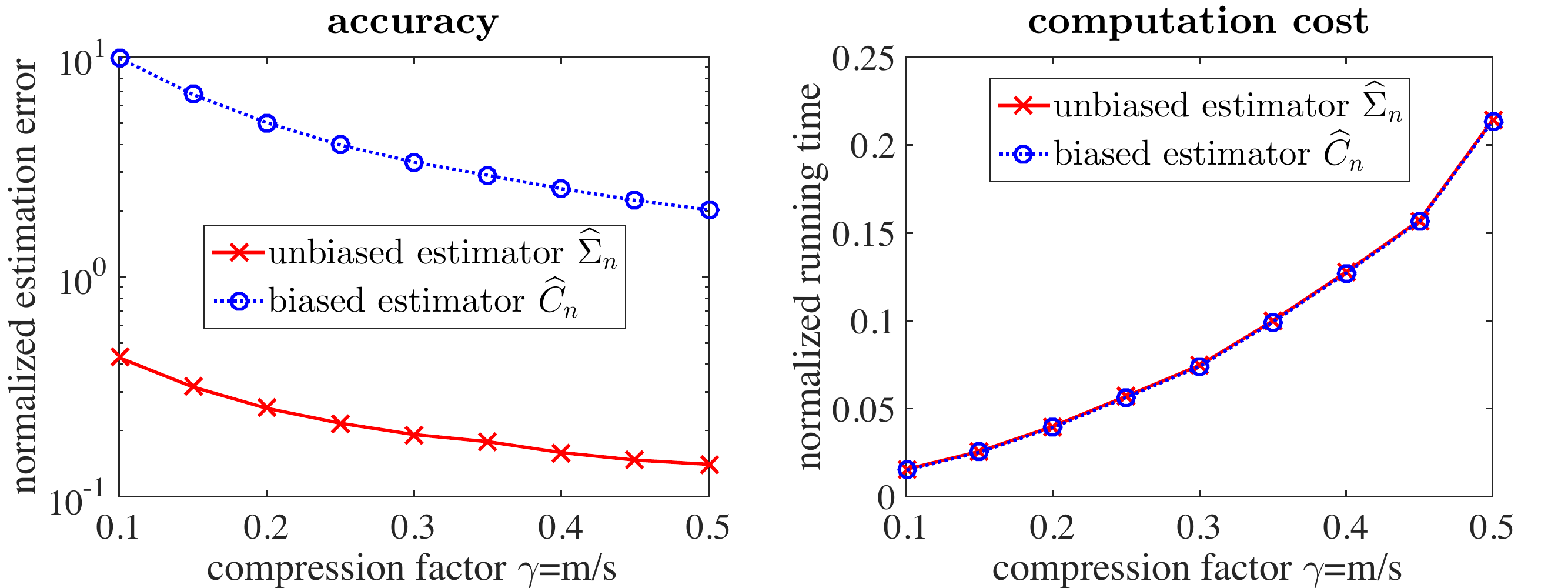}
	\caption{Plot of the normalized covariance estimation error (log scale) and computation cost on the gen4 data set. Our estimator $\widehat{\boldsymbol{\Sigma}}_n$ is compared with the biased estimator in~\cite{Pourkamali_ICML_2014} for varying values of $\gamma$. We see that the unbiased covariance estimator $\widehat{\boldsymbol{\Sigma}}_n$ decreases the estimation error by an order of magnitude. 
	}
	\label{fig:gen4_cov_error}
\end{figure}

\subsection{Traffic data set}
In the last experiment, we consider the traffic data set which contains video surveillance of traffic from a stationary camera. Some examples of this data set are shown in Fig.~\ref{fig:traffic_data_visualization}. Considering individual frames of video as data samples, each frame is a $48\times 48$ pixel image that is vectorized so $p=48^2=2304$. Moreover, we have access to $n=5139$ frames. 
\begin{figure}[!h]
	\centering
	\includegraphics[width=0.5\textwidth]{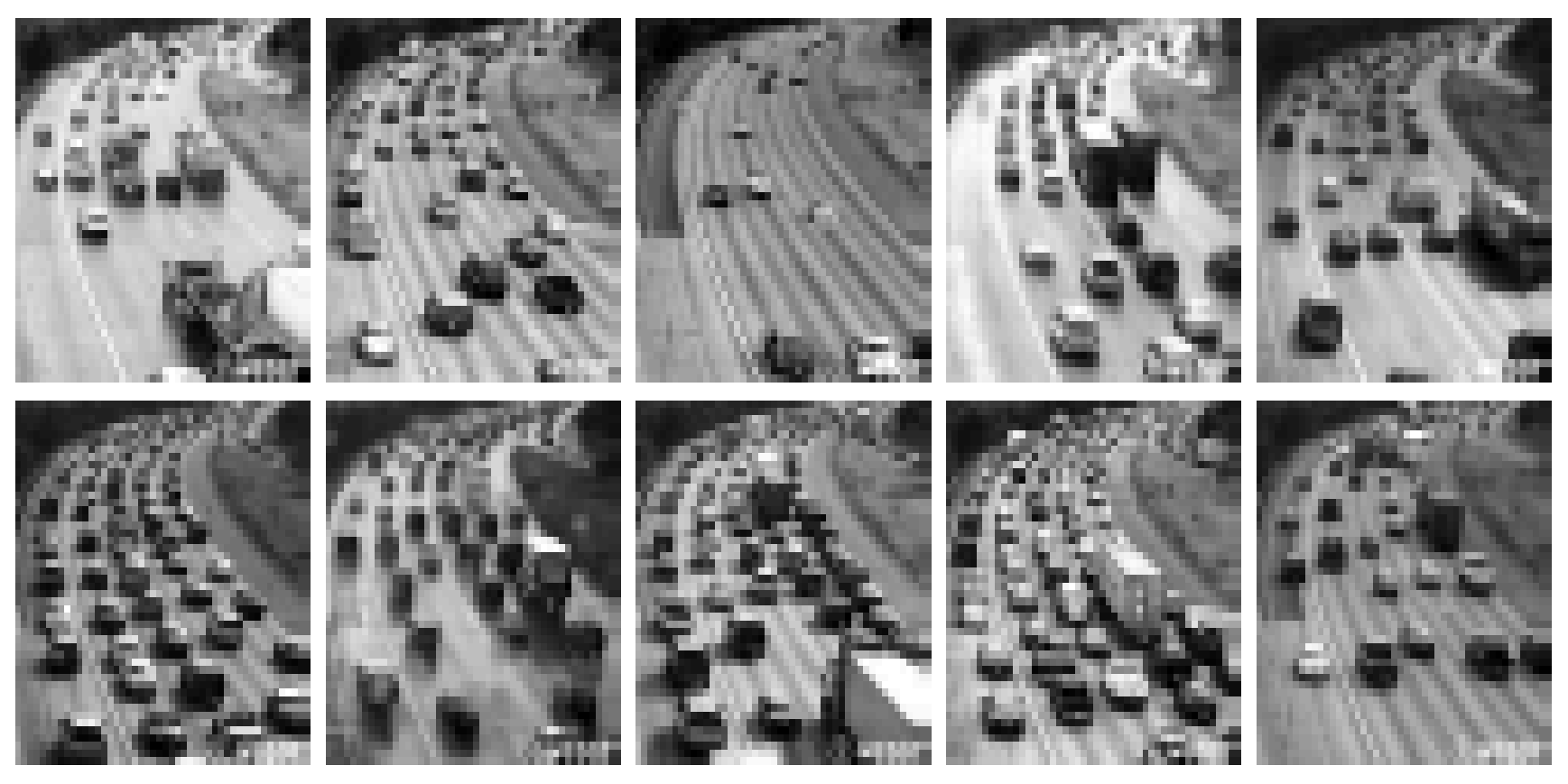}
	\caption{Example frames of the traffic data set.
	}
	\label{fig:traffic_data_visualization}
\end{figure}

Our proposed approach is used to estimate the sample covariance matrix of the full data $\C_n$ from sparse random projections. As we see in Fig.~\ref{fig:traffic_cov_error}, using unbiased estimator $\widehat{\boldsymbol{\Sigma}}_n$ for this data set results in accurate estimates such that the normalized estimation error is less than $0.06$ for all values of compression factor $\gamma$. The biased estimator $\widehat{\C}_n$ has approximately the same accuracy as $\widehat{\boldsymbol{\Sigma}}_n$ on this data set. The reason is because the bias term $\frac{\kappa}{m+1}\diag(\C_n)$ in equation~\eqref{eq:exp-C-hat-n} depends on both the kurtosis $\kappa$ and the structure of the underlying covariance matrix $\C_n$. Hence, the value of this bias term can be relatively small for some special structures on $\C_n$. However, our proposed estimator $\widehat{\boldsymbol{\Sigma}}_n$ provides an unbiased estimate of $\C_n$ without any restrictive assumptions on the data's structure.

Moreover, we examine the tradeoffs between computational savings and accuracy on the traffic data set. We report the running times to compute $\widehat{\boldsymbol{\Sigma}}_n$ and $\widehat{\C}_n$  normalized by the time spent to form the sample covariance matrix of the full data $\C_n$. As we see in Fig.~\ref{fig:traffic_cov_error}, the computational savings are proportional to $\gamma^2$. We also observe that $\widehat{\boldsymbol{\Sigma}}_n$ has roughly the same running time as $\widehat{\C}_n$.
\begin{figure}[!h]
	\centering
	\includegraphics[width=\textwidth]{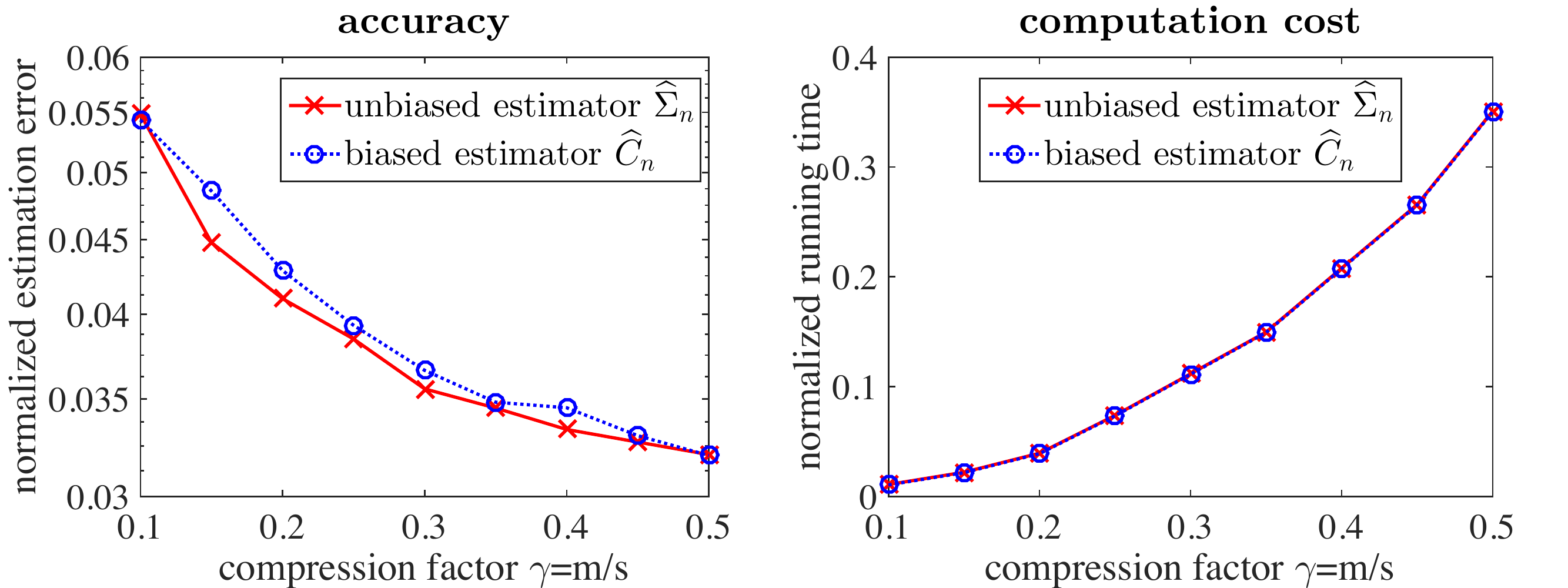}
	\caption{Accuracy and computation cost of the estimated covariance matrix from sparse random projections on the traffic data set for varying values of the compression factor $\gamma$. 
	}
	\label{fig:traffic_cov_error}
\end{figure}

In Fig.~\ref{fig:traffic_pc}, we provide a sample visual comparison of the first eigenvector of the underlying covariance matrix $\C_n$ and that estimated by our approach $\widehat{\boldsymbol{\Sigma}}_n$ for $\gamma=0.1$, $0.3$, and $0.5$. We see that the first eigenvectors of $\widehat{\boldsymbol{\Sigma}}_n$ and $\C_n$ are almost identical even for small values of the compression factor, e.g., $\gamma=0.1$. The first eigenvector of our covariance estimator $\widehat{\boldsymbol{\Sigma}}_n$ reveals the underlying structure from sparse random projections, which appears to be a traffic trend along the roadway.

This experiment typifies a real-world application of our approach. We showed that our proposed estimator $\widehat{\boldsymbol{\Sigma}}_n$ can be used to extract the covariance structure of the data from sparse random projections. At the same time, we achieve significant reduction of the computation cost proportional to $\gamma^2$, where $\gamma$ is the compression factor. 

\begin{figure}[!h]
	\centering
	\includegraphics[width=0.5\textwidth]{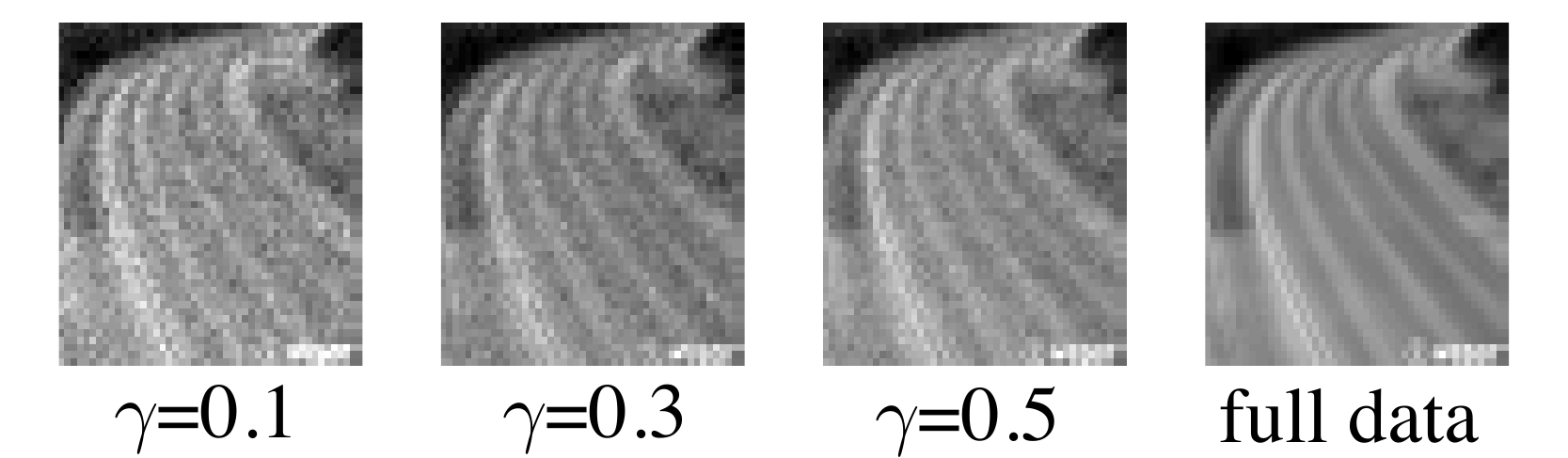}
	\caption{Visual comparison between the first eigenvector of the sample covariance matrix $\C_n$ (full data) and that estimated by our approach $\widehat{\boldsymbol{\Sigma}}_n$ on the traffic data set for the compression factor $\gamma=0.1$, $0.3$, and $0.5$.
	}
	\label{fig:traffic_pc}
\end{figure}
As one final note, we observe that our proposed covariance estimator $\widehat{\boldsymbol{\Sigma}}_n$ results in more accurate estimates on the traffic data set compared to the MNIST and gen4 data sets. To explain this, let us consider the stable rank $\beta:=\|\C_n\|_F^2/\|\C_n\|_2^2$,
where $\|\C_n\|_F$ represents the Frobenius norm of the sample covariance matrix $\C_n$. The parameter $\beta$ is a measure that gives us important information about
how spread out the eigenvalues of $\C_n$ are. In fact, a value of $\beta$ closer to $1$ indicates a faster rate of decay of the eigenvalues. The values of $\beta$ for each of the MNIST, gen4, and traffic data sets are $1.03$, $1.19$, and $1.00$ respectively. Therefore, the sample covariance matrix of the traffic data set has one  dominant eigenvector. 
Hence, the experimental results demonstrate that our proposed estimator $\widehat{\boldsymbol{\Sigma}}_n$ leads to more accurate estimates of $\C_n$ when the eigenvalues of $\C_n$ decay faster, i.e., $\beta$ is closer to $1$.  

\section{Conclusions}\label{Conclusion}
This paper presents an unbiased estimator to extract the covariance structure from low-dimensional random projections of data. As the main theoretical ingredients, our analysis holds for any random projection matrix consisting of i.i.d.~zero-mean entries with finite first four moments. In addition, our analysis does not require restrictive assumptions on the sample covariance matrix. Therefore, the present work extends prior work on compressive covariance estimation and provides a new analysis of covariance estimation from compressive measurements.

Furthermore, we have employed very sparse random projections to reduce the computation burden in high-dimensional data regimes. In fact, we showed that our approach can be used to estimate the covariance matrix, while achieving significant savings in computation cost. We have presented experimental results on several real-world data sets to show the accuracy of our proposed approach for varying values of the compression factor.

\subsection*{Acknowledgements}
I would like to thank Stephen Becker for his helpful comments and suggestions.

\newpage
\bibliographystyle{ieeetr}
\bibliography{PHD_Farhad}

\end{document}